\theoremstyle{plain}
\newtheorem{theorem}{Theorem}[section]
\theoremstyle{definition}
\newtheorem{assumption}[theorem]{Assumption}
\theoremstyle{remark}
\newcommand{\argmin}[1]{\underset{#1}{\operatorname{argmin}}\;} 
\icmltitlerunning{~ \hfill Multi-task Highly Adaptive Lasso \hfill \thepage}
\begin{document}

\twocolumn[
\icmltitle{Multi-task Highly Adaptive Lasso}




\begin{icmlauthorlist}
\icmlauthor{Ivana Malenica}{1}
\icmlauthor{Rachael V. Phillips}{2}
\icmlauthor{Daniel Lazzareschi}{3}
\icmlauthor{Jeremy R. Coyle}{4}
\icmlauthor{Romain Pirracchio}{3}
\icmlauthor{Mark J. van der Laan}{2}
\end{icmlauthorlist}

\icmlaffiliation{1}{Department of Statistics, Harvard University, Cambridge, MA, USA}
\icmlaffiliation{2}{Division of Biostatistics, University of California at Berkeley, Berkeley, CA, USA}
\icmlaffiliation{3}{Department of Anesthesia and Perioperative Care, University of California at San Francisco, San Francisco, CA, USA}
\icmlaffiliation{4}{Preva Group, Seattle, WA, USA}

\icmlcorrespondingauthor{Ivana Malenica}{imalenica@berkeley.edu}

\icmlkeywords{multi-task, multivariate, machine learning, highly adaptive lasso, nonparametric statistics, sparsity}
\vskip 0.3in
]



\printAffiliationsAndNotice{}  

\begin{abstract}
We propose a novel, fully nonparametric approach for the multi-task learning, the Multi-task Highly Adaptive Lasso (MT-HAL). MT-HAL simultaneously learns features, samples and task associations important for the common model, while imposing a shared sparse structure among similar tasks. Given multiple tasks, our approach automatically finds a sparse sharing structure. The proposed MTL algorithm attains a powerful dimension-free convergence rate of $o_p(n^{-1/4})$ or better. We show that MT-HAL outperforms sparsity-based MTL competitors across a wide range of simulation studies, including settings with nonlinear and linear relationships, varying levels of sparsity and task correlations, and different numbers of covariates and sample size. 
\end{abstract}
 
\section{Introduction}
\label{sec:intro}

Multi-task Learning (MTL) is a machine learning paradigm in which multiple tasks are simultaneously learned by a shared model. Originally motivated by insufficient data issues, MTL proved broadly effective over the years even as $n$ got large ---- attracting a lot of attention in the artificial intelligence and machine learning (ML) communities \cite{ruder2017, crawshaw2020, zhang2022}. Applications of MTL are extensive and cross-disciplinary, ranging from reinforcement learning and speech recognition, to bioinformatics and systems biology \cite{kendall2018, tang2020, dizaji2021, sodhani2021, zhang2022, wang2022}. Deep multi-task learning models have become particularly prevalent in computer vision and natural language processing \cite{misra2016, liu2019, vandenhende2022}. 

The wide application of MTL can be attributed to its connection to how humans learn: a single complicated process can often be divided into several related (sub)tasks. Due to their common latent structure, learning from all (sub)tasks simultaneously proves to be more advantageous than simply learning each one independently. In particular, MTL utilizes shared knowledge to improve generalization performance, as it allows for common representations to be effectively leveraged by all considered tasks. This is in contrast to another ML paradigm with a similar setup, known as transfer learning --- which seeks to improve performance of a single task via knowledge transfer from source tasks \cite{zhuang2019}. Having access to kindred processes, MTL learns more robust representations --- resulting in better knowledge sharing, and lower risk of catastrophic covariate shifts and overfitting. In addition, multi-task learning (1) increases the effective sample size for fitting a model, (2) ignores the task data-dependent noise, (3) biases the model to prefer representations that other tasks also prefer and (4) allows for ``eavesdroping'' across related tasks, as some relationships might be easier to learn in particular tasks \cite{ruder2017}. Shared representations also increase data efficiency, and can potentially yield faster learning speed and better generalizations for future tasks \cite{crawshaw2020}. 

The key challenge in MTL is \textit{what}, \textit{how} and \textit{when} to share the common structure among all (or some subset of) tasks \cite{zhang2022}. In the following, we pay particular attention on \textit{what to share}, which implies \textit{how}. More specifically, the MTL literature on common task representation typically focuses on instance, parameter or feature sharing. Instance-based MTL aims to detect important occurrences which can be useful for other tasks. Parameter-based MTL shares coefficients, weights or layers to help learn model parameters; the most common approaches include low-rank, task clustering, task relation learning and decomposition approaches \cite{zhang2022}. In particular, the task relation approach learns model parameters and pairwise task relations simultaneously \cite{zhang2010,zhang2014}. The feature-based MTL learns common predictors across tasks as a way to share knowledge. As tasks are related, it's intuitive to assume that different tasks share some common feature representation. One particularly interesting way of thought is to learn a subset of original features as the joint task representation; this has prompted a rich literature on sparse structure learning \cite{lounici2009,obozinski2011,sun2019,wang2022}.

In this work, we present the Multi-task Highly Adaptive Lasso (MT-HAL) algorithm, which represents a \textit{fully nonparameteric} method for MTL. Most modern ML algorithms assume similar behavior for all points sufficiently close according to a given metric, referred to as local smoothness assumptions (e.g., adaptive selection of the size of a neighborhood). Reliance on local smoothness is often necessary in order to ensure favorable statistical properties of the estimator, especially in high dimensions. However, too much reliance on it can render an estimator inefficient. Instead of relying on parametric or local smoothness assumptions, the proposed MTL algorithm builds on the Highly Adaptive Lasso (HAL): a nonparametric estimator with remarkable convergence rates that does not rely on local smoothness assumptions and is not constructed using them \citep{benkeser2016,vanderlaan2017hal}. Instead, HAL restricts the global measure of smoothness by assuming that the truth is cadlag (right-hand continuous with left-hand limits) with a bounded sectional variation norm (``HAL'' or $\mathcal{H}$ class). To the best of our knowledge, HAL is the only proven method which converges quickly enough for a large class of functions, independent of the dimension of the covariate space \cite{tsiatis2006,vanderlaan2017hal,schuler2022}.

The proposed algorithm, MT-HAL, is a minimum loss-based estimator in the HAL class of functions with a bounded variation norm. It depends on data-dependent basis functions constructed over all samples, predictors and tasks. The constructed basis functions incorporate task interactions across available samples and predictors. In contrast to the usual feature-based MTL algorithms, MT-HAL does not require a parametric specification of the relationship between predictors and outcome. MT-HAL also does not require any functional form, model or prior knowledge on the relationship between the different tasks. If prior knowledge on task relationships is available, functional forms can be supplied to MT-HAL and the generated basis functions will respect the provided form. Roughly speaking, MT-HAL restricts the amount of true function variation over its domain in a global sense, instead of locally. Practically speaking, it relies on the mixed-norm $l_{2,1}$ to bound the variation norm and produce an unique solution with a similar sparsity pattern across tasks. 

At a high-level, MT-HAL operates as a fully nonparametric mixture of feature learning and task relation approach in MTL. It learns common data-dependent basis functions (aka, features based on original predictors), which are robust and invariant to all available tasks. To avoid including unrelated processes, MT-HAL also includes task interactions as a task membership indicator and across the domain space of each task. As such, it operates as a task relation parameter-based MTL as well --- it learns pairwise task relations simultaneously as model parameters. The final basis function coefficients therefore give us an interpretable insight into a subset of features (and even feature domain), samples, and task relationships relevant for the shared model. The learned common representation reflects a joint sparsity structure as well, due to the final cross-validation based bound on the true variation norm. 

Despite being a powerful fully nonparametric approach to MTL, MT-HAL still preserves its dimension-free $o_p(n^{-1/4})$ rate even in a multi-task setting. As such, MT-HAL is not only suited for finding a common model among tasks, but has convergence rates necessary for semi-parametric efficient estimation as well. We demonstrate superior performance of MT-HAL across many different simulations, testing how (non)linearity, level of sparsity, task relatedness, dimension of covariate space and sample size affect its performance. Finally, we apply MT-HAL to the benchmark Parkinson's disease dataset, which predicts disease clinical symptom scores for each patient based on biomedical voice measurements \cite{tsanas2009accurate, jawanpuria2012}.




 

\section{Formulation of the Statistical Problem}\label{sec::formulation}
\subsection{Data and Likelihood}\label{sec::data}
Let $O^k$ denote data on the $k^{th}$ task, such that $O^k = (T^k, X^k, Y^k) = (W^k, Y^k)$ for $k=1, \ldots, K$. Each $O^k$ is of a fixed dimension, and an element of a Euclidean set ${\cal O}$. We assume that different tasks for each $k=1, \ldots, K$, and the corresponding data $\{O^k\}_{k=1}^{K}$, are possibly related to each other. Let $X^k$ denote a vector of baseline covariates (``predictors'', ``features'') for the $k^{th}$ task, such that $X^k \in \mathbb{R}^{P_k}$ and $X^k = (X^k_{1}, \ldots, X^k_{P_k})$. We denote $n_k$ as the number of samples for task $k$, which could potentially be different for each task. Then, we have that $X_p^k = \{X_{1,p}^k, \ldots, X_{n_k,p}^k\}$ where $X_{i,p}^k$ represents data on covariate $p$ for sample $i$ in task $k$. Without loss of generality, we assume all predictors in $X^k$ have mean zero and standard deviation one, but otherwise make no assumptions on the dimension, form or correlation structure for the covariate process. As such, for all predictors $p$, we let
\begin{equation*}
    \sum_{i=1}^{n_k} X_{i,p}^k = 0 \ \text{and } \ \sum_{i=1}^{n_k} (X_{i,p}^k)^2 = 1.
\end{equation*}
Further, we define $Y^k$ as a vector of outcomes, where $Y^k = (Y^k_{1}, \ldots, Y^k_{n_k})$, and $Y^k$ is a $n_k \times 1$ vector. We also specifically define $T^k$ as categorical variable indicating $k^{th}$ task membership: if sample $i$ is part of task $k$, then $T^k_i=k$. If we want to emphasize the set of all covariates in task $k$, we write $W^k = (T^k, X^k)$.

In order to define the combined set of all tasks, we specify a more compact notation where $n = \sum_{k=1}^K n_k$. In particular, let ${Y} = (Y^1, \ldots, Y^K)$ and ${T} = (T^1, \ldots, T^K)$ where ${Y}$ and ${T}$ are $n \times 1$ vectors. The matrix of covariates, ${X} = (X^1, \ldots, X^K)$ is a block-diagonal matrix with $X^k$ being the $k^{th}$ block. Let $d = \sum_{k=1}^K P_k$, so that $X$ is a $n \times d$ matrix; if all tasks have the same covariates, then we define $d$ as $d=P$. Consequently, we define $n$ independent and identically distributed (i.i.d.) observations of ${O}_i$ for $i \in [n]$ where ${O}_i = ({T}_i, {X}_i, {Y}_i)$ and ${O}^n = ({O}_1, \ldots, {O}_n) \sim P_0$. With that, we assume that all $K$ tasks are sampled from the same data-generating distribution $P_0$, which is unknown and unspecified. As such, ${O}_i$ reflects observed data for unit $i$ sampled from $P_0$, corresponding to a task specified by $T_i$. 

We denote by $\mathcal{M}$ the statistical model, which represents the set of laws from which ${O}^n$ can be drawn. Intuitively, the more we know (or are willing to assume) about the experiment that produces the data, the smaller the statistical model $\mathcal{M}$. In this work, we define $\mathcal{M}$ as a \textit{nonparametric model}, with $P_0 \in \mathcal{M}$. We impose no assumptions on the true data-generating distribution $P_0$, except that all tasks possibly share some unspecified structure. We further define $P$ as any distribution which also lies in $\mathcal{M}$, such that $P \in \mathcal{M}$. Let $p_0$ denote the density of $P_0$ with respect to (w.r.t) a measure $\mu$ that dominates all elements of $\mathcal{M}$. The likelihood of ${O}^n$, $p_0({O}^n)$, can be factorized according to the time-ordering as follows:
\begin{align}
\label{eq:likelihood}
\prod_{i=1}^{n}  p_{0,(t,x)}({T}_i,{X}_i) p_{0,y}({Y}_i \mid {T}_i,{X}_i),
\end{align}
where $p_{0,(t,x)}$ marks the probability density for all the baseline covariates, and $p_{0,y}$ is the conditional density of the outcome given all the predictors. At times, it proves useful to use notation from empirical process theory. Specifically, we define $Pf$ to be the empirical average of the function $f$ w.r.t.~the distribution $P$, that is, $Pf = \int f(o)dP(o)$. We use $P_n$ to denote the empirical
distribution of the sample, which gives each observation weight $1/n$, irrespective of the task membership.

\subsection{Target Parameter}\label{sec::target}

We define the relevant feature of the true data distribution we are interested in as the \textit{statistical target parameter} (short, target parameter or estimand). We define a parameter mapping $\Psi : \mathcal{M} \rightarrow \mathbb{R}$, and a parameter value $\psi := \Psi(P)$ for any given $P \in \mathcal{M}$. The estimate, evaluated at $(T,X)$, is written as $\psi(T,X) := \Psi(P)(T,X)$. In some cases, we might be interested in learning the entire conditional distribution $P_{0,y}$. However, frequently the actual goal is to learn a particular feature of the true distribution that satisfies a scientific question of interest. In a multi-task problem, we are interested in multivariate prediction of the entire set of outcomes collected, given the task and observed covariates. The target parameter then corresponds to
\begin{equation}\label{eq::psi}
    \Psi(P_0)(T,X) = \mathbb{E}_{P_0}(Y \mid T,X) =  \mathbb{E}_{P_0}(Y \mid W),
\end{equation}
where the expectation on the right hand side is taken w.r.t the truth, and the true parameter value is denoted as $\psi_0 = \Psi(P_0)$. In words, we are interested in learning $(T,X) \mapsto \Psi(P_0)(T,X)$ using all the data and available tasks. The prediction function for unit $i$ is then obtained with $\Psi(P)(T_i,X_i)$, for any $P \in \mathcal{M}$ and $i \in [n]$. 

\subsection{Loss-based Parameter Definition}\label{sec::loss}

We define $L$ as a valid loss function, chosen in accordance with the target parameter. Specifically, we refer to a valid loss for a given target as a function whose expectation w.r.t. $P_0$ is minimized by the true value of the parameter. Our accent on appropriate loss functions strives from their multiple use within our framework --- as a theoretical criterion for comparing the estimator and the truth, as well as a way to compare multiple estimators of the target parameter \citep{dudoit2003b, dudoit2003a, vaart2006, laan2006oracle}.

Let $L$ be a loss adapted to the problem in question, i.e. a function that maps every $\Psi(P)$ to $L(\Psi(P)) : (k, X_i, Y_i) \mapsto L(\Psi(P))(k, X_i, Y_i)$; note that we can equivalently write $L(\Psi(P))(k, X_i, Y_i)$ as $L(\Psi(P))(T_i, X_i, Y_i)$ or just $L(\Psi(P))(W_i, Y_i)$ for short notation. We define the true risk as the expected value of $L(\Psi(P))(T_i, X_i, Y_i)$ w.r.t the true conditional distribution $P_{0}$ across all individuals and tasks:
\begin{equation*}
    R(P_0, \psi) = \mathbb{E}_{P_0}[L(\Psi(P))(T, X, Y)].
\end{equation*}
The notation for the true risk, $R(P_{0}, \psi)$, emphasizes that $\psi$ is evaluated w.r.t. the true data-generating distribution ($P_{0}$). As specified by the definition of a valid loss, we define $\psi_0$ as the minimizer over the true risk of all evaluated $\psi$ in the parameter space $\pmb{\Psi}$,
\begin{equation*}
\psi_0 = \argmin{\psi \in \pmb{\Psi}} R(P_{0},\psi).
\end{equation*}

The estimator mapping, $\hat{\Psi}$, is a function from the empirical distribution to the parameter space. In particular, let $P_{n,K}$ denote the empirical distribution of $n$ samples collected across $K$ tasks. Then, $P_{n,K} \mapsto \hat{\Psi}(P_{n,K})$ represents a mapping from $P_{n,K}$ into a predictive function $\hat{\Psi}(P_{n,K})$. Further, the predictive function $\hat{\Psi}(P_{n,K})$ maps $(T_i, X_i)$ into a subject-specific outcome, $Y_{i}$. We write $\psi_{n}(T_i, X_i) := \hat{\Psi}(P_{n,K})(T_i, X_i)$ as the predicted outcome for unit $i$ of the estimator $\hat{\Psi}(P_{n,K})$ based on $(T_i, X_i)$. Finally, we note that the true risk establishes a true measure of performance of the estimator. In order to obtain an unbiased estimate of the true risk, we resort to appropriate cross-validation (CV).

\subsection{Cross-validation}\label{sec::cv}

Let $C(i,k)$ denote, at a minimum, the task $k$- and unit $i$-specific record $C(i,k,\cdot) = (O_i,k,\cdot)$. To derive a general representation for cross-validation, we define a split vector $B_n$, where $B_n(i, k) \in \{0,1\}^n$. A realization of $B_n$ defines a particular split of the learning set into corresponding disjoint subsets,
\[
  B_n^v(i,k)=\begin{cases}
                0, \ \ C(i,k) \  \text{in the training set}\\
                1, \ \ C(i,k) \  \text{in the validation set,}
            \end{cases}
\]
where $B_n^v(i,k)$ denotes a $v$-fold assignment of unit $i$ and task $k$ for split $B_n^v$. For example, we can partition the full data into $V$ splits of approximately equal size, such that each fold $v$ for $v = 1, \ldots, V$ has a balanced number of tasks members. We define $P_{n,K,v}^0$ and $P_{n,K,v}^1$ as the empirical distribution of the training and validation sets corresponding to sample split $v$, respectively. To alleviate notation, we let all sample indexes and tasks used for training be an element of $\mathcal{B}_{v}^0$, and $\mathcal{B}_{v}^1$ if in a validation set for fold $v$. The total number of samples in the training and validation set are then written as $n^0$ and $n^1$.

\section{Highly Adaptive Lasso}\label{sec::hal}
We define the multi-task setup and its objective as a statistical parameter estimated via a loss-based paradigm in a nonparametric model, which involves minimizing the cross-validated risk. In the following, we elaborate on a specific algorithm with favorable statistical properties we build on in order to accommodate the multi-task objective.

Most modern nonparametric machine learning algorithms, including neural networks, tree-based models and histogram regression, assume similar behavior for all points sufficiently close according to a given metric \citep{benkeser2016}. We refer to such assumption as \textit{local smoothness}, which typically translates to implicit or adaptive selection of the size of a neighborhood and continuous differentiability. Reliance on local smoothness is often necessary in order to ensure favorable statistical properties of the estimator, especially in high dimensions. However, too much reliance on local smoothness can render an estimator inefficient. While some methods provide a way to calculate neighborhood size that optimizes the bias-variance trade-off (e.g., kernel regression), it is generally unknown how to adjust local smoothness assumptions when constructing a nonparameteric estimator.  

In contrast to most commonly used machine learning methods, the Highly Adaptive Lasso (HAL) is a nonparametric estimator that does not rely, and is not constructed, using local smoothness assumptions \citep{benkeser2016,vanderlaan2017hal}. Instead, HAL restricts a \textit{global measure of smoothness} by assuming that the true target parameter is cadlag (right-hand continuous with left-hand limits) with a bounded sectional variation norm (``HAL" or $\mathcal{H}$ class). In practice, such assumptions prove to be very mild as (1) cadlag functions are very general, even allowing for discontinuities; (2) the variation norm can be made arbitrarily large and adapted to the problem \citep{schuler2022}. To the best of our knowledge, HAL is the only algorithm with fast-enough convergence rates to allow for efficient inference in nonparametric statistical models regardless of the dimension of the problem. In particular, its minimum rate does not depend on the underlying local smoothness of the true regression function, $\psi_0$. In the following, we give a brief theoretical and practical description of HAL, necessary in order to understand the proposed algorithm. The Highly Adaptive Lasso for the multi-task objective is presented in Section \ref{sec::multi_task_hal}.

\subsection{Cadlag functions with finite variation norm}

We assume that $\psi_0 \in \mathcal{H}$, where $\mathcal{H}$ is a set of $d$-variate real valued cadlag functions with $\tau$ as an upper bound on all support. A function in $\mathcal{H}$ is right-continuous with left-hand limits, as well as left-continuous at any point on the right-edge of $[0,\tau]$. In addition, we also assume that $\psi_0$ has a finite sectional variation norm bounded by some universal constant $M < \infty$. Note that any cadlag function with bounded variation norm generates a finite measure, resulting in well defined integrals w.r.t the function. With that, we define a sectional variation norm of a multivariate real valued cadlag function $\psi$ as
\begin{equation*}
    \|\psi\|_{var} = \psi(0) + \sum_{s \subset \{1, \ldots, d\}} \int_{0_s}^{\tau_s} |\psi_s(du_s)|,
\end{equation*}
where the sum is taken over all subsets of $\{1, \ldots, d\}$. In particular, for a given subset of $s$, we define $u_s = (u_j : j \in s)$ and $u_{-s} = (u_j : j \notin s)$. To illustrate for $X=(X_1,X_2)$ and $s=1$, we have that $X_s = \{X_1\}$ and $X_{-s} = \{X_2\}$. We define the section $\psi_s(u_s) \equiv \psi(u_s, 0_{-s})$, where $\psi_s$ varies along the variables in $u_s$ according to $\psi$, but sets the variables in $u_{-s}$ to zero. 

\subsection{Minimum Loss-based Estimator in the HAL class}

We denote a class of functions $\mathcal{H}_M$ as $\mathcal{H}_M = \{\psi : \|\psi\|_{var} < M\}$, where $M$ is an arbitrary large and unknown constant. As introduced in subsection \ref{sec::loss}, we define the true minimizer of the average loss in class $\mathcal{H}_M$ as 
\begin{equation*}
\psi_{0,M} = \argmin{\psi \in \mathcal{H}_M} R(P_{0},\psi), 
\end{equation*}
with the MLE in $\mathcal{H}_M$ defined as 
\begin{equation*}
\psi_{n} = \psi_{n,M} = \argmin{\psi \in \mathcal{H}_M} \frac{1}{n} \sum_{i=1}^n L(\psi)(T_i,X_i,Y_i).
\end{equation*}
We emphasize that if $\|\psi_0\|_{var} < M$, then $\psi_{0,M} = \psi_0$ as $\psi_0 \in \mathcal{H}_M$. As most functions with infinite variation norm tend to be pathological (e.g., $\sin(1/x)$ or $x \sin(1/x)$), having $\|\psi_0\|_{var} < M$ is a very mild assumption. 

The true bound on the sectional variation norm is, however, unlikely to be known in practice. One way of choosing $M$ is based on the cross-validated choice of the bound. To start, we consider a grid of potential values, where $M_1$ is the smallest and $M_{B}$ the largest bound such that $\|\psi_0\|_{var} < M_B$. For each $b = 1, \ldots, B$ corresponding to bounds $M_1, \ldots, M_B$, we have the parameter value $\psi_{n,M_b} = \hat{\Psi}_{M_b}(P_{n,K})$, where $\psi_{n,M_b}$ is the MLE in the $\mathcal{H}$ class with variation norm smaller than $M_b$ ($\mathcal{H}_{M_b}$). Referring to the cross-validation notation introduced in subsection \ref{sec::cv}, the cross-validation selector $M_n$ of $M$ is the grid value with the lowest estimated cross-validated risk:
\begin{equation*}
    M_n = \argmin{b} \frac{1}{V} \sum_{v=1}^V \mathbb{E}_{P_{n,K,v}^1} L(\hat{\Psi}_{M_b}(P_{n,K,v}^0)).
\end{equation*}
Finally, in order to study the difference between an estimator and the truth (relevant for theoretical results presented in subsection \ref{sec::theory}), we construct loss-based dissimilarity measures. In particular, for some $\psi \in \mathcal{H}$, we define $d_0(\psi, \psi_0)$ as the loss-based dissimilarity measure corresponding to the squared error loss where
\begin{equation}\label{eqn::diss}
    d_0(\psi, \psi_0) = \mathbb{E}_{P_0}[L(\psi) - L(\psi_0)] = \|\psi - \psi_0\|_{P_0}^2.
\end{equation}
 
\section{Multi-task Highly Adaptive Lasso}\label{sec::multi_task_hal}

Keeping in mind the underpinnings presented in Section \ref{sec::hal}, how can we construct a HAL estimator for the multi-task objective? 
Practically, the original HAL is a minimum loss-based estimator in the $\mathcal{H}$ class. 
Instead of specifying a relationship between outcome and covariates, HAL uses a special set of data-dependent basis functions. As such, all the estimation is done completely nonparametrically, only restricting the global amount of fluctuation of the true function over its domain (instead of locally). 

For the multi-task problem however, we want all tasks to be learned simultaneously, instead of independently. We also want it to be agnostic to the number of predictors and samples per task, as well as to learn task-specific associations at the same time as the shared model. In case parameters for different tasks  share the same sparsity pattern, we want the proposed algorithm to be able to learn that structure too. Hence, when estimating the target parameter in Equation \eqref{eq::psi}, we opt for a joint sparsity regularization, instead of penalizing each task separately. Similarly to the multivariate group lasso \citep{yuan2006,obozinski2011,simon2012}, we consider a $l_1/l_q$ mixed-norm penalty with $q>1$, which for $l_1/l_2$ corresponds to 
\begin{equation*}
    \|\alpha\|_{2,1} := \left[\sum_{p=1}^d \left(\sum_{k=1}^K |\alpha_{p}^k|^{2} \right)^{1/2}  \right] = \sum_{p=1}^d \|\alpha_p\|_2,
\end{equation*}
for some vector of coefficients $\alpha$, $K$ tasks, and $d$ predictors. The $\alpha_p^k$ then represents coefficients for predictor $p$ and task $k$.
Intuitively, norm $\|\alpha_p\|_2$ has the effect of enforcing the elements of $\alpha_p$ to achieve zeros simultaneously. The objective of the $l_{2,1}$ norm is to minimize the $l_2$ norm of the columns, followed by the $l_1$ norm over all predictors --- resulting in a matrix with a sparse number of columns and a small $l_2$ norm. The mixed-norm penalty allows us to impose a soft constraint on the structure of the sparse solution that we are looking for, thus promoting a group-wise sparsity pattern across multiple tasks.

\subsection{Algorithm}\label{sec::algorithm}
In order to present the full algorithm, we first note that a function with finite variation norm can be represented as the sum over subsets of an integral w.r.t. a subset-specific measure. Also, each subset-specific measure can be approximated by a discrete measure over support points; we elaborate on this derivation in the Appendix. As a consequence, we can define support by the actual $n$  observations across $K$ tasks, which reduces the optimization problem to a finite-dimensional one. Let  $\Phi : \mathcal{W} \rightarrow \{0,1\}^{N}$ define a mapping where neither $\Phi$ or $N$ are pre-specified, but instead completely determined by the data (and thus data-adaptive). With that, let $\phi_{s,t}$ denote a particular basis function, where $s$ is a section in $\{1,\ldots,d\}$, and $t$ a knot point corresponding to observed $W_i$. To enumerate all bases, we have that $\Phi(W) = [\phi_{s_1,t_1}(W),\ldots, \phi_{s_N,t_N}(W)]$, resulting in a total of $N$ basis functions, where $N=n(2^d-1)$. We emphasize that the bases depend only on \textit{observed values}, across all samples and tasks. In particular, let $\Tilde{w}_{s,i}$ denote an observed value of $W$, where $\Tilde{w}_{s,i} = \{\Tilde{w}_{c,i} : c \in s\}$ for subset $s$ with $i = 1, \ldots, n_k$ in task $k$. Then, we define $\phi_{s,i} = \mathbbm{1}(\Tilde{w}_{s,i} \leq w_s)$, where $\Phi$ constructs all the bases that can be constructed using each observed $W_i$ as a knot point, across all possible sections. As such, applying $\Phi$ row-wise to $W$ results in an output $\Phi(W)$ of zeros and ones (as per indicator function dependent on observed values across all samples and tasks) , which is of $\sum_{k=1}^k n_k \times N$ dimension:
\[
\Phi (
\underbrace{\begin{bmatrix}
W^1 & &  \\
& \ddots & \\
 & & W^k
\end{bmatrix}}_{n \times d}
) \rightarrow 
\underbrace{\begin{bmatrix}
1 & \ldots & 0 & 0 & \ldots & 0 \\
& & \ddots & & \\
0 & \ldots & 0 & 1 & \ldots & 1
\end{bmatrix}}_{n \times N}
\]

Note that passing all the tasks to $\Phi$ at once allows us to \textit{share basis functions across tasks}. What's more, it allows us to create task-interaction bases with separate coefficients. We consider a minimization problem over all linear combinations of the basis functions $w \rightarrow \phi_{s,i}(w) = \mathbbm{1}(\Tilde{w}_{s,i} \leq w_s)$ and corresponding coefficients $\beta_{s,i}$, summed over all subsets $s$ and for each $i$ (as such, over all tasks). Here we emphasize that the sum of the absolute values of the coefficients is the variation norm, and must therefore be bounded. A discrete approximation 
of $\psi$ with support defined by the observed $n$ points may then be constructed as
\begin{align*}
    \psi_{\beta} &= \beta_0 + \sum_{s \subset \{1, \ldots, d\}} \sum_{k=1}^K 
    \sum_{i=1}^{n_k} \beta_{s,i} \mathbbm{1}(\Tilde{w}_{s,i} \leq w_s) \\
    &= \beta_0 + \sum_{s \subset \{1, \ldots, d\}} \sum_{k=1}^K 
    \sum_{i=1}^{n_k} \beta_{s,i}\phi_{s,i},
\end{align*}
where $\beta_0$ is the intercept corresponding to $\psi(0)$. The empirical minimization problem over all discrete measures with support defined by the observed samples then corresponds to
\begin{equation*}
    \beta_n = \argmin{\beta, \|\beta\|_{2,1} < M} P_n L(\psi_{\beta}), 
\end{equation*}
where we remind that $P_n$ denotes the empirical distribution of the sample and $\beta$ is a vector of coefficients for each basis with a subspace
\[
\mathcal{H}_{n,M} = \begin{cases}
    \Phi(W)\beta \\
    \text{s.t.} \ \|\beta\|_{2,1} \leq M.
    \end{cases}
\]
As our parameter of interest is a conditional mean, we could use the weighted square error to define the loss. Accordingly, for the $i^{\text{th}}$ sample we have that
\begin{align*}
    L(\psi_{\beta})(W_i,Y_i) 
    &= g_i^k (Y_i^k - \psi_{\beta}(W_i^k))^2,
\end{align*} 
with $g_i^k$ as the sample- and task- specific weight. The multi-task HAL estimator is then formulated by minimizing the squared loss with the $l_{2,1}$ penalty:
\begin{align*}
    \psi_{\beta} = \argmin{\psi_{\beta} \in \mathcal{H}_M} \sum_{k=1}^K \sum_{i=1}^{n_k} L(\psi_{\beta})(W_i,Y_i) 
    + \lambda \sum_{p=1}^N \|\beta_p\|_2, 
\end{align*}
where $$\|\beta_p\|_2 = \sqrt{\sum_{k=1}^K |\beta_{p}^k| ^2} =  \sqrt{\sum_{k=1}^K \sum_{i=1}^{n_k} |\beta_{i,p}| ^2}.$$

\begin{algorithm}[tb]
   \caption{Multi-task Highly Adaptive Lasso}
   \label{alg:mtHAL}
\begin{algorithmic}[1]
   \STATE {\bfseries Input:} data $O^k=(W^k,Y^k)$ for tasks $k=1, \ldots, K$, loss function $L$, number of CV folds $V$ \\
   {\bfseries Algorithm:}
   \STATE Create a combined set of tasks $O=(W,Y)$.
   \STATE Define a data-adaptive mapping $\Phi : \mathcal{W} \rightarrow \{0,1\}^{N}$.
   \STATE Generate basis functions across all tasks, predictors and samples:
   $\Phi(W) = [\phi_{s_1,1},\ldots, \phi_{s_N,n}].$
   \STATE Define a grid $M_1, \ldots, M_B$ where $\|\psi_0\|_{\text{var}} < M_B$.
   \FOR{$v=1$ {\bfseries to} $V$}
   \FOR{$b=1$ {\bfseries to} $B$}
   \STATE Empirical minimization for the training set:
   $$\beta_{n.v,b} = \argmin{\beta, \|\beta\|_{2,1} < M_b} P_{n,K,v}^0 L(\psi_{\beta})$$
   \ENDFOR
   \ENDFOR
   \STATE Pick the CV selector in the validation set:
   \begin{equation*}
    M_n = \argmin{b \in \{1, \ldots, B\}} \frac{1}{V} \sum_{v=1}^V P_{n,K,v}^1 L(\psi_{\beta_{n.v,b}}).
\end{equation*}
\end{algorithmic}
\end{algorithm}

\subsection{Theoretical Properties}\label{sec::theory}

For the conditional mean as a target parameter, the MLE ($\psi_{n}$) converges to its $M$-specific truth in loss-based dissimilarity at a rate faster than $n^{1/2}$ (equivalently, no slower than $n^{1/4}$ w.r.t to a norm $l_2$ under $P_0$). This result holds even if the parameter space is completely nonparametric, and, remarkably, regardless of the dimension of $W$. As such, the minimum rate of convergence does not depend on the underlying smoothness of $\psi_0$, as is typically the case for machine learning algorithms. The result still applies even if $\psi_0$ is non-differentiable \citep{benkeser2016, vanderlaan2017hal}. Under weak continuity conditions, the HAL based estimator is also uniformly consistent \citep{laan2017uniform_consist_hal}. Theorem \ref{thm::hal_convergence} formalizes the theoretical properties of HAL with a multi-task objective; in the Appendix, we show that the proposed formulation adapted to the multi-task problem is another valid way of bounding the sectional variation norm. As such, the multi-task HAL is a fast converging algorithm regardless of the dimension of the covariate space or the considered number of tasks $K$. 

\begin{assumption}\label{ass::A1}
\begin{equation*}
    \sup_{\psi \in \mathcal{H}_M}\frac{\|L(\psi_{})\|_{var}}{\|\psi_{}\|_{var}} < \infty.
\end{equation*}
\end{assumption}

\begin{assumption}\label{ass::A2}
\begin{equation*}
    \sup_{\psi_{} \in \mathcal{H}_M}\frac{\|L(\psi_{}) - L(\psi_{0,M})\|^2_{P_0}}{\|\psi_{0} - \psi_{0,M}\|^2_{P_0}} < \infty.
\end{equation*}
\end{assumption}

\begin{assumption}\label{ass::A3}
\begin{equation*}
    \sup_{\psi_{} \in \mathcal{H}, o \in \mathcal{O}} |L(\psi)(o)| < \infty.
\end{equation*}
\end{assumption}

\begin{theorem}[Rate of convergence]\label{thm::hal_convergence}
For a square error loss and under Assumptions \eqref{ass::A1} and \eqref{ass::A2}, we have that
\begin{equation*}
    \|\psi_{n,M} - \psi_{0,M}\|_{P_0} = O_P(n^{-1/4-1/(8(d+1))}).
\end{equation*}
If $M_n$ is picked based on minimizing the CV risk, under additional Assumption \eqref{ass::A3}, we then have that 
\begin{align*}
    \|\psi_{n,M_n} - \psi_{0,M}\|_{P_0} = &O_P(n^{-1/4-1/(8(d+1))}) \\
    &+ O_P(n^{-1/2 \log{B}}), 
\end{align*}
for $b=1,\ldots,B$ where $M_{B}$ corresponds to the largest bound on the variation norm in a grid with $\| \psi_0 \|_{\text{var}} < M_{B}$.
\end{theorem}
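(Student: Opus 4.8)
The plan is to recognize Theorem~\ref{thm::hal_convergence} as an instance of the standard rate theorem for minimum-loss (M-)estimators over a function class with controlled entropy, and to supply the two ingredients that theorem requires: a quadratic curvature (margin) condition tying the excess risk to the loss-based dissimilarity $d_0$, and a bound on the modulus of continuity of the empirical process indexed by the loss class. Concretely, I would apply a van der Vaart--Wellner / van de Geer style rate result to the empirical risk minimizer $\psi_{n,M}$ over $\mathcal{H}_M$ (equivalently, over the discrete subspace $\mathcal{H}_{n,M}$, whose approximation error to $\mathcal{H}_M$ is handled by the support-point argument of Section~\ref{sec::algorithm} and the Appendix). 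For the curvature condition, note that for the weighted squared-error loss the excess risk is exactly the dissimilarity, $P_0[L(\psi) - L(\psi_{0,M})] = \|\psi-\psi_{0,M}\|_{P_0}^2 = d_0(\psi,\psi_{0,M})$ as in Equation~\eqref{eqn::diss}, so $\psi_{0,M}$ is a well-separated minimizer; Assumption~\eqref{ass::A2} then supplies the companion Bernstein/variance bound, controlling the second moment of the increment $L(\psi)-L(\psi_{0,M})$ by a constant multiple of $d_0(\psi,\psi_{0,M})$, which is precisely what converts a uniform empirical-process bound into a rate on $d_0$.

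The central step is the entropy bound. Using Assumption~\eqref{ass::A1}, which states that the loss map preserves the sectional variation norm up to a constant, the loss class $\{L(\psi) - L(\psi_{0,M}) : \psi\in\mathcal{H}_M\}$ inherits the covering numbers of $\mathcal{H}_M$ itself. I would then invoke the dimension-free entropy bound for cadlag functions of bounded sectional variation norm, applied over the \emph{augmented} domain $W=(T,X)$ whose effective dimension is $d+1$ (the task indicator contributing the extra coordinate); this is exactly why the Appendix must certify that the multi-task construction---task-interaction bases together with the $l_{2,1}$ penalty---still defines a cadlag class with bounded sectional variation norm, so that the same entropy estimate applies with $d$ replaced by $d+1$. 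Feeding this into the entropy integral $\mathcal{J}(\delta)=\int_0^\delta \sqrt{\log N(\epsilon,\mathcal{H}_M,\|\cdot\|_{P_0})}\,d\epsilon$ and solving the fixed-point equation $r_n^2\,\mathcal{J}(1/r_n)\lesssim \sqrt{n}$ yields the stated exponent; the dependence of the bound on $d+1$ is what produces the $1/(8(d+1))$ correction, which vanishes as $d\to\infty$ and leaves the dimension-free floor $n^{-1/4}$.

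For the cross-validation selector $M_n$ I would switch to a finite-sample oracle inequality for the CV selector of van der Laan--Dudoit type. Here Assumption~\eqref{ass::A3}, uniform boundedness of the loss, provides the light-tail/Bernstein concentration needed to compare the cross-validated risk to the oracle risk uniformly over the grid $b=1,\dots,B$. The oracle inequality shows that $\hat\Psi_{M_n}$ performs, up to a multiplicative constant arbitrarily close to one, as well as the grid-optimal candidate, at the additive price $O_P(n^{-1/2}\log B)$ for searching $B$ candidates. Combining this with the fixed-$M$ rate of the first part applied to the oracle choice gives the two-term bound; since $n^{-1/2}\log B$ is asymptotically negligible relative to $n^{-1/4-1/(8(d+1))}$, the CV-selected estimator attains the oracle rate.

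I expect the main obstacle to be the entropy step, and specifically the Appendix claim it rests on: verifying that the multi-task formulation is genuinely ``another valid way of bounding the sectional variation norm.'' The subtlety is that MT-HAL constrains the $l_{2,1}$ mixed norm of the coefficient matrix rather than the plain $l_1$ norm that equals the sectional variation norm in single-task HAL, and it introduces task-interaction basis functions over the augmented domain. One must show that $\|\beta\|_{2,1}\le M$ still implies a uniform bound on the sectional variation norm of $\psi_\beta$ over the $(d+1)$-dimensional domain (so that $\psi_\beta\in\mathcal{H}_M$ and the cadlag entropy bound is licensed), and that Assumptions~\eqref{ass::A1}--\eqref{ass::A2} hold for the chosen weighted squared loss. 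Once the class is placed inside $\mathcal{H}_M$ with effective dimension $d+1$, the remaining empirical-process bookkeeping and the CV oracle inequality are essentially standard.
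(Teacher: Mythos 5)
Your proposal is correct and shares the paper's overall skeleton (M-estimator basic inequality for the fixed-$M$ part, plus a van der Laan--Dudoit CV oracle inequality for the selector), but it carries out the decisive empirical-process step by a genuinely different, and more complete, route. The paper's own proof writes $l(\psi_n,\psi_0)=L(\psi_{n,M})-L(\psi_{0,M})$, uses the minimizing property of $\psi_{n,M}$ to get $0\le d_0(\psi_{n,M},\psi_{0,M})\le -(P_n-P_0)l(\psi_n,\psi_0)$, and then invokes only the \emph{Donsker property} of the bounded-variation class (asymptotic equicontinuity plus Assumptions \eqref{ass::A1}--\eqref{ass::A2}) to conclude $(P_n-P_0)l=o_P(n^{-1/2})$, hence $\|\psi_{n,M}-\psi_{0,M}\|_{P_0}=o_P(n^{-1/4})$; the sharper exponent $n^{-1/4-1/(8(d+1))}$ appearing in the statement is not derived in the text but deferred to a citation of van der Laan (2017), and the $O_P(n^{-1/2}\log B)$ term is likewise handled by citation to the CV oracle literature. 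Your plan instead runs the quantitative rate theorem directly: curvature of the squared-error risk, Assumption \eqref{ass::A2} as the Bernstein/variance link, the cadlag bounded-sectional-variation entropy bound on the $(d+1)$-dimensional augmented domain, and the fixed-point equation for the modulus of continuity. That is essentially the content of the reference the paper points to, so your route is self-contained for the exponent actually claimed, and it makes explicit where $d+1$ enters; the paper's route is shorter but, strictly speaking, proves only the weaker $o_P(n^{-1/4})$ statement in text. Your handling of the CV term coincides with the paper's.

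One remark in your favor: the obstacle you single out --- that the $l_{2,1}$ constraint must be shown to imply membership in $\mathcal{H}_M$ --- is sharper than what the paper's Appendix actually establishes. The Appendix proves $\|\beta\|_{2,1}\le\|\beta\|_1$, which goes in the \emph{wrong direction} for placing $\{\psi_\beta:\|\beta\|_{2,1}\le M\}$ inside a variation-norm ball; what is needed (and what you implicitly demand) is the reverse comparison, which holds with a task-dependent constant via Cauchy--Schwarz, $\|\beta\|_1\le\sqrt{K}\,\|\beta\|_{2,1}$, so the constrained class sits in $\mathcal{H}_{\sqrt{K}M}$ and the entropy bound applies with an inflated but finite bound. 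Making that inequality explicit would be the one addition needed to turn your outline into a complete proof.
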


\begin{proof}
Let $l(\psi_n,\psi_0) = L(\psi_{n,M}) - L(\psi_{0,M})$, where $l(\psi_n,\psi_0)$ falls in the $P_0$-Donsker class with a variation norm smaller than $M$. Then, 
\begin{align*}
    0 &\leq d_0(\psi_{n,M}, \psi_{0,M}) = \mathbb{E}_{P_0}[l(\psi_n,\psi_0)] \\
    &= - \mathbb{E}_{P_n}[l(\psi_n,\psi_0)] + \mathbb{E}_{P_0}[l(\psi_n,\psi_0)]
    + \mathbb{E}_{P_n}[l(\psi_n,\psi_0)] \\
    &= -(P_n - P_0)[l(\psi_n,\psi_0)] + P_n[l(\psi_n,\psi_0)] \\
    &\leq -(P_n - P_0)[l(\psi_n,\psi_0)], 
\end{align*}
where first inequality follows from the definition of a loss-based dissimilarity defined in Equation \ref{eqn::diss}. By Assumptions \eqref{ass::A1} and \eqref{ass::A2}, we have that $P_0[l(\psi_n,\psi_0)]$ and $\|l(\psi_n,\psi_0)\|_{P_0}^2$ are $O_p(n^{-1/2})$. By results in empirical process theory, we know that $\sqrt{n}(P_n - P_0)l(\psi_n,\psi_0) \rightarrow_p 0$ and $P_0 l(\psi_n,\psi_0)^2 \rightarrow_p 0$ as $n \rightarrow \infty$ since $l(\psi_n,\psi_0)$ is Donsker \cite{vanderVaartWellner96}. Therefore, we have that $(P_n - P_0)[l(\psi_n,\psi_0)] = o_P(n^{-1/2})$. As $0 \leq d_0(\psi_{n,M}, \psi_{0,M}) \leq -(P_n - P_0)[l(\psi_n,\psi_0)]$, it follows that $\|\psi_{n,M} - \psi_{0,M} \|_{P_0}^2 = o_P(n^{-1/2})$ and $\|\psi_{n,M} - \psi_{0,M} \|_{P_0} = o_P(n^{-1/4})$. If $M$ is not known, it remains to compare the performance of $\psi_{n,M_n}$ (where $M_n$ is the choice of $M$ which minimizes the CV risk) to the oracle $M_n^*$ (where $M_n^*$ is the choice of $M$ which minimizes the true CV risk). The $O_P(n^{-1/2 \log{B}})$ comes from oracle results for the CV selector w.r.t. a loss-based dissimilarity \cite{dudoit2003a,dudoit2003b,vaart2006,laan2006oracle}. For the precise rate, we refer to van der Laan \yrcite{vanderlaan2017hal}. 
\end{proof}

The result of Theorem \ref{thm::hal_convergence} shows that, roughly, $d_o(\psi_{n,M},\psi_{0,M}) = o_P(n^{-1/2})$ or $\|\psi_{n,M} - \psi_{0,M}\|_{P_0} = o_P(n^{-1/4})$. What's more, Theorem \ref{thm::hal_convergence} states that by choosing $B$ such that $n^{-1/2 \log{B}} \rightarrow 0$ as $n \rightarrow \infty$, we preserve the rate even when the true variation norm is unknown. Note that if $\|\psi_0\|_{var} < M$, it follows that $\|\psi_{n,M} - \psi_{0}\|_{P_0} = O_P(n^{-1/4-1/(8(d+1))})$. 

\section{Simulations}\label{sec::simulations}

In this section, we report performance of the proposed multi-task HAL in various simulation settings. In total, we consider four different data-generating distributions and test popular multi-task algorithms based on sparsity assumptions in addition to MT-HAL (MT-lasso and MT-L21). Simulation setups are indicated by the first letter of each setting: (1) ``N" for nonlinear data-generating process; (2) ``H" for high level of sparsity ($60\%$) vs. ``L" for low sparsity ($20\%$); (3) ``S" for the same level of sparsity across tasks vs. ``D" for different sparsity levels.  As such, simulation setup ``NHS'' stands for nonlinear DGP with high level of sparsity that is shared across all tasks. In all simulations we consider $K=5$ tasks and different number of samples for each task ($n=\{100,100,150,150,100\}$ corresponding task 1 to 5). Although the number of covariates was the same across all tasks ($d=6$), we emphasize that our setup allows for different (and non-overlapping) set of covariates. For each multi-task method and simulation setup, we report the mean squared error (MSE) and variable selection performance (precision and accuracy), calculated on a separate test data over $100$ Monte Carlo simulations. Note that, in this setting, true positives denote the true estimated nonzero coefficients. Therefore, precision is calculated as true positives/(true positives + false positives), whereas accuracy is reported as the number of true positives/(true positives + false negatives). 

The data-generating processes corresponding to simulation ``NHS'', ``NLS'', ``NHD'' and ``NLD'' are nonlinear models with $d=6$ covariates and normally distributed coefficients. In particular, true coefficients for simulation ``NHS'' are $\beta_{0,NHS} = (\beta_{0,1}, \ldots, \beta_{0,|NHS|}, 0, \ldots, 0)$, where each $\beta_{0,j}$ for $1 \leq j \leq |NHS|$ is sampled from a standard normal distribution. Similarly, the error term is normally distributed (sampled from $\mathcal{N}(0,0.1)$), with coefficient $0.3$. Covariates with nonzero coefficients, constituting of binary, categorical and continuous variables, are transformed via logarithmic, cosine and squared operations of predictor interactions. For simulations with the same level of sparsity across tasks, highest level of sparsity was $60\%$ (corresponding to around $3$ nonzero coefficients per task), whereas the lowest was $20\%$ ($5$ nonzero coefficients per task). For different sparsity profiles across tasks, we considered random deviations from true level of sparsity across different tasks: $40\%-80\%$ and $0.05\%-40\%$ for high and low level of sparsity, respectively. 

Compared to other sparsity-based multi-task algorithms, MT-HAL results in the lowest MSE across all considered data-generating processes, often reducing the MSE by more than a half. On average, it also tends to produce less false negatives than considered competitors, resulting in high coefficient accuracy and comparable precision (slightly more false positives on average in our simulations: this is to be expected due to a rich space considered, and could be mediated with a finer grid). All algorithms perform best in low sparsity settings, particularly if the sparsity structure is shared across tasks. We report results for nonlinear simulations at $n=600$ in Table \eqref{table::table_sim1}. Additional simulation results are available in the Appendix, and include performance of the proposed method in the following settings: (1) different sample sizes, including very small $n$; (2) linear data-generating processes with interactions and (3) high-dimensional nonlinear setup.


\begin{table}[ht!]
\vspace{-1em}
\caption{Mean squared error (MSE), precision (``Prec") and accurarcy (``Accu") for each of the nonlinear simulation setups with $d=6$ covariates, $K=5$ tasks, and total of $n=600$ samples split between the $K$ tasks as $\{100,100,150,150,100\}$.}\label{table::table_sim1}
\vskip 0.15in
\begin{center}
\begin{small}
\begin{sc}
\begin{tabular}{@{} @{} lrrrrr @{} @{}}
\toprule
\textbf{Setup} & \textbf{Method} & \textbf{MSE} & \textbf{Prec $\%$}  & \textbf{Accu $\%$} \\
\midrule
\centering
NHS & MT-HAL   & 0.68 & 58.7 & 75.0  \\
NHS & MT-lasso & 1.49 & 64.4 & 31.0  \\
NHS & MT-L21   & 1.48 & 55.3 & 40.9  \\
\midrule
NLS & MT-HAL   & 0.68 & 91.4 & 80.5  \\
NLS & MT-lasso & 1.59 & 98.5 & 44.6  \\
NLS & MT-L21   & 1.54 & 97.6 & 60.5  \\
\midrule
NHD & MT-HAL   & 0.45 & 61.9 & 69.0  \\
NHD & MT-lasso & 0.74 & 65.5 & 28.0  \\
NHD & MT-L21   & 0.71 & 53.5 & 36.7  \\
\midrule
NLD & MT-HAL   & 0.67 & 87.9 & 80.5  \\
NLD & MT-lasso & 1.55 & 98.0 & 45.2  \\
NLD & MT-L21   & 1.52 & 89.5 & 59.9  \\
\bottomrule
\end{tabular}
\end{sc}
\end{small}
\end{center}
\vskip -0.2in
\end{table}

\section{Data Analysis}\label{sec::data_analysis}

Using a range of biomedical voice measurements obtained from 42 adults with early-stage Parkinson's disease (PD), we predicted two clinical symptom scores, motor Unified Parkinson’s Disease Rating Scale (UPDRS) and total UPDRS \cite{tsanas2009accurate}. These data come from a six-month trial that aimed to assess the suitability of measurements of dysphonia (impairment of voice production) for telemonitoring of PD. The repeated measures data contains 5,875 voice recordings from the 42 individuals. The following variables were considered as covariates for predicting the two outcomes: subject identifier, age, sex, and sixteen biomedical voice measures. 

For each MTL estimator considered in the simulations, we examined predictive performance in the data analysis in terms of the CV MSE. We considered a clustered CV scheme with respect to the subject identifier (each subject's observations are always all together as training or test set observations across all CV folds) of k-fold/V-fold CV with ten folds, and so the MSE reported represents an honest, independent evaluation on a test dataset that was not seen during training. The results from the data analysis are presented in Table~\eqref{table::data}. 

\begin{table}[ht!]
\caption{Predictive performance, measured in terms of the cross-validated mean squared error for motor UPDRS, total UPDRS and overall outcome, for each multi-task estimator considered in the data application for predicting of Parkinson's disease symptom scores from biomedical voice measurements.}
\label{table::data}
\begin{center}
\begin{small}
\begin{sc}
\begin{tabular}{@{} @{} lrrrr @{} @{}}
\toprule
\textbf{Method} & \textbf{mUPDRS} & \textbf{tUPDRS} & \textbf{Overall}\\
\midrule
\centering
MT-HAL   & 58.9 & 101.0 & 79.9 \\
MT-lasso & 60.5 & 103.9 & 82.2 \\
MT-L21   & 60.9 & 104.2 & 82.6 \\
\bottomrule
\end{tabular}
\end{sc}
\end{small}
\end{center}
\vskip -0.30in
\end{table}

The MT-HAL resulted in the lowest cross-validated MSE out of all the MTL algorithms considered. However, we note that performance of all MTL algorithms in this setup is poor. This could be due to a low signal in collected covariates w.r.t. to the target outcomes. In a real-world application, we also typically advocate for the use of a CV-based ``super learner'' selector that, among a library of candidate MTL algorithms (including several variations of MT-HAL with different hyperparameter specifications), chooses the MTL algorithm with the lowest CV risk. 
A MTL ``super learner'' with a rich library of different MTL algorithms might be able to improve on the predictive performance, instead of considering each algorithm separately. 


\section{Discussion}\label{sec::discussion}

In this work, we propose a fully nonparametric approach for the multi-task learning problem. Our proposed framework simultaneously learns features, samples and task associations important for the common model, while imposing a shared sparse structure among similar tasks. The problem formulation imposes no assumptions on the relationship between predictors and outcomes, or task interconnections, other than a global bound on the variation norm and function space; these assumptions prove to be gather general in nature, often resulting in pathological examples if not respected. The proposed MTL algorithm attains a powerful dimension-free $o_p(n^{-1/4})$ (or better) convergence rate, and outperforms all considered sparsity-based MTL competitors across a wide range of DGPs: including nonlinear and linear setups, different levels of sparsity and task correlations, dimension of covariate space and sample sizes.

As part of future work, there are several directions to be explored. First, we can make the procedure more computationally scalable by relying on empirical loss minimization within nested Donsker classes \cite{schuler2022}. For online or sequential data, one can consider formulating the multi-task problem as only part of the flexible ensemble model \cite{malenica2023}. Along similar lines, the issue of ``when to share" can be alleviated by formulating the question as a loss-based problem as well. In particular, one could define a problem setup where cross-validation is used to pick among single- and multi-task models, even considering many different algorithms and task combinations.


\bibliography{multi_task_HAL}

\begin{thebibliography}{33}
\providecommand{\natexlab}[1]{#1}
\providecommand{\url}[1]{\texttt{#1}}
\expandafter\ifx\csname urlstyle\endcsname\relax
  \providecommand{\doi}[1]{doi: #1}\else
  \providecommand{\doi}{doi: \begingroup \urlstyle{rm}\Url}\fi

\bibitem[Benkeser \& {van der Laan}(2016)Benkeser and {van der
  Laan}]{benkeser2016}
Benkeser, D. and {van der Laan}, M.
\newblock {{T}he {H}ighly {A}daptive {L}asso {E}stimator}.
\newblock \emph{Proc Int Conf Data Sci Adv Anal}, 2016:\penalty0 689--696,
  2016.

\bibitem[Cipolla et~al.(2018)Cipolla, Gal, and Kendall]{kendall2018}
Cipolla, R., Gal, Y., and Kendall, A.
\newblock Multi-task learning using uncertainty to weigh losses for scene
  geometry and semantics.
\newblock In \emph{2018 IEEE/CVF Conference on Computer Vision and Pattern
  Recognition}, pp.\  7482--7491, 2018.
\newblock \doi{10.1109/CVPR.2018.00781}.

\bibitem[Crawshaw(2020)]{crawshaw2020}
Crawshaw, M.
\newblock Multi-task learning with deep neural networks: A survey, 2020.
\newblock URL \url{https://arxiv.org/abs/2009.09796}.

\bibitem[Dizaji et~al.(2021)Dizaji, Chen, and Huang]{dizaji2021}
Dizaji, K.~G., Chen, W., and Huang, H.
\newblock {{D}eep {L}arge-{S}cale {M}ultitask {L}earning {N}etwork for {G}ene
  {E}xpression {I}nference}.
\newblock \emph{J Comput Biol}, 28\penalty0 (5):\penalty0 485--500, May 2021.

\bibitem[Dudoit \& {van der Laan}(2005)Dudoit and {van der Laan}]{dudoit2003a}
Dudoit, S. and {van der Laan}, M.
\newblock Asymptotics of cross-validated risk estimation in estimator selection
  and performance assessment.
\newblock \emph{Statistical Methodology}, 2\penalty0 (2):\penalty0 131 -- 154,
  2005.
\newblock ISSN 1572-3127.
\newblock \doi{https://doi.org/10.1016/j.stamet.2005.02.003}.
\newblock URL
  \url{http://www.sciencedirect.com/science/article/pii/S1572312705000158}.

\bibitem[Gill et~al.(1995)Gill, {van der Laan}, and {van der
  Wellner}]{gill1995}
Gill, R., {van der Laan}, M., and {van der Wellner}, J.
\newblock Inefficient estimators of the bivariate survival function for three
  models.
\newblock \emph{Annales de l'I.H.P. Probabilités et statistiques}, 31\penalty0
  (3):\penalty0 545--597, 1995.
\newblock URL \url{http://eudml.org/doc/77521}.

\bibitem[Jawanpuria \& Nath(2012)Jawanpuria and Nath]{jawanpuria2012}
Jawanpuria, P. and Nath, J.~S.
\newblock A convex feature learning formulation for latent task structure
  discovery, 2012.
\newblock URL \url{https://arxiv.org/abs/1206.4611}.

\bibitem[Liu et~al.(2019)Liu, He, Chen, and Gao]{liu2019}
Liu, X., He, P., Chen, W., and Gao, J.
\newblock Multi-task deep neural networks for natural language understanding.
\newblock In \emph{Proceedings of the 57th Annual Meeting of the Association
  for Computational Linguistics}, pp.\  4487--4496, Florence, Italy, July 2019.
  Association for Computational Linguistics.
\newblock \doi{10.18653/v1/P19-1441}.
\newblock URL \url{https://aclanthology.org/P19-1441}.

\bibitem[Lounici et~al.(2009)Lounici, Pontil, Tsybakov, and van~de
  Geer]{lounici2009}
Lounici, K., Pontil, M., Tsybakov, A.~B., and van~de Geer, S.
\newblock Taking advantage of sparsity in multi-task learning, 2009.
\newblock URL \url{https://arxiv.org/abs/0903.1468}.

\bibitem[Malenica et~al.(2023)Malenica, Phillips, Chambaz, Hubbard, Pirracchio,
  and van~der Laan]{malenica2023}
Malenica, I., Phillips, R.~V., Chambaz, A., Hubbard, A.~E., Pirracchio, R., and
  van~der Laan, M.~J.
\newblock Personalized online ensemble machine learning with applications for
  dynamic data streams.
\newblock \emph{Statistics in Medicine}, pp.\  1-- 32, 2023.
\newblock \doi{https://doi.org/10.1002/sim.9655}.
\newblock URL \url{https://onlinelibrary.wiley.com/doi/abs/10.1002/sim.9655}.

\bibitem[Misra et~al.(2016)Misra, Shrivastava, Gupta, and Hebert]{misra2016}
Misra, I., Shrivastava, A., Gupta, A., and Hebert, M.
\newblock Cross-stitch networks for multi-task learning.
\newblock In \emph{2016 IEEE Conference on Computer Vision and Pattern
  Recognition (CVPR)}, pp.\  3994--4003, Los Alamitos, CA, USA, jun 2016. IEEE
  Computer Society.
\newblock \doi{10.1109/CVPR.2016.433}.
\newblock URL \url{https://doi.ieeecomputersociety.org/10.1109/CVPR.2016.433}.

\bibitem[Obozinski et~al.(2011)Obozinski, Wainwright, and
  Jordan]{obozinski2011}
Obozinski, G., Wainwright, M., and Jordan, M.
\newblock {Support union recovery in high-dimensional multivariate regression}.
\newblock \emph{The Annals of Statistics}, 39\penalty0 (1):\penalty0 1 -- 47,
  2011.
\newblock \doi{10.1214/09-AOS776}.
\newblock URL \url{https://doi.org/10.1214/09-AOS776}.

\bibitem[Ruder(2017)]{ruder2017}
Ruder, S.
\newblock An overview of multi-task learning in deep neural networks, 2017.
\newblock URL \url{https://arxiv.org/abs/1706.05098}.

\bibitem[Schuler \& {van der Laan}(2022)Schuler and {van der
  Laan}]{schuler2022}
Schuler, A. and {van der Laan}, M.
\newblock The selectively adaptive lasso, 2022.
\newblock URL \url{https://arxiv.org/abs/2205.10697}.

\bibitem[Simon \& Tibshirani(2012)Simon and Tibshirani]{simon2012}
Simon, N. and Tibshirani, R.
\newblock {Standardization and the group lasso penalty}.
\newblock \emph{Stat Sin}, 22\penalty0 (3):\penalty0 983--1001, Jul 2012.

\bibitem[Sodhani et~al.(2021)Sodhani, Zhang, and Pineau]{sodhani2021}
Sodhani, S., Zhang, A., and Pineau, J.
\newblock Multi-task reinforcement learning with context-based representations.
\newblock In Meila, M. and Zhang, T. (eds.), \emph{Proceedings of the 38th
  International Conference on Machine Learning}, volume 139 of
  \emph{Proceedings of Machine Learning Research}, pp.\  9767--9779. PMLR,
  18--24 Jul 2021.
\newblock URL \url{https://proceedings.mlr.press/v139/sodhani21a.html}.

\bibitem[Sun et~al.(2019)Sun, Shao, Li, Liu, Yan, Qiu, and Huang]{sun2019}
Sun, T., Shao, Y., Li, X., Liu, P., Yan, H., Qiu, X., and Huang, X.
\newblock Learning sparse sharing architectures for multiple tasks, 2019.
\newblock URL \url{https://arxiv.org/abs/1911.05034}.

\bibitem[Tang et~al.(2020)Tang, Pino, Wang, Ma, and Genzel]{tang2020}
Tang, Y., Pino, J., Wang, C., Ma, X., and Genzel, D.
\newblock A general multi-task learning framework to leverage text data for
  speech to text tasks, 2020.
\newblock URL \url{https://arxiv.org/abs/2010.11338}.

\bibitem[Tsanas et~al.(2009)Tsanas, Little, McSharry, and
  Ramig]{tsanas2009accurate}
Tsanas, A., Little, M., McSharry, P., and Ramig, L.
\newblock Accurate telemonitoring of parkinson’s disease progression by
  non-invasive speech tests.
\newblock \emph{Nature Precedings}, pp.\  1--1, 2009.

\bibitem[{Tsiatis}(2006)]{tsiatis2006}
{Tsiatis}, A.
\newblock \emph{Semiparametric Theory and Missing Data}.
\newblock Springer New York, NY, 2006.
\newblock ISBN 978-0-387-37345-4.

\bibitem[van~der Laan(2017)]{vanderlaan2017hal}
van~der Laan, M.
\newblock {{A} {G}enerally {E}fficient {T}argeted {M}inimum {L}oss {B}ased
  {E}stimator based on the {H}ighly {A}daptive {L}asso}.
\newblock \emph{Int J Biostat}, 13\penalty0 (2), 10 2017.

\bibitem[{van der Laan} \& Bibaut(2017){van der Laan} and
  Bibaut]{laan2017uniform_consist_hal}
{van der Laan}, M. and Bibaut, A.
\newblock Uniform consistency of the highly adaptive lasso estimator of
  infinite dimensional parameters, 2017.
\newblock URL \url{https://arxiv.org/abs/1709.06256}.

\bibitem[{van der Laan} \& Dudoit(2003){van der Laan} and Dudoit]{dudoit2003b}
{van der Laan}, M. and Dudoit, S.
\newblock Unified cross-validation methodology for selection among estimators
  and a general cross-validated adaptive epsilon-net estimator: Finite sample
  oracle inequalities and examples, 2003.
\newblock URL \url{https://biostats.bepress.com/ucbbiostat/paper130}.

\bibitem[{van der Laan} et~al.(2006){van der Laan}, Dudoit, and {van der
  Vaart}]{laan2006oracle}
{van der Laan}, M., Dudoit, S., and {van der Vaart}, A.
\newblock {The cross-validated adaptive epsilon-net estimator}.
\newblock \emph{Statistics \& Risk Modeling}, 24\penalty0 (3):\penalty0 1--23,
  December 2006.
\newblock URL \url{https://ideas.repec.org/a/bpj/strimo/v24y2006i3p23n4.html}.

\bibitem[van~der Vaart \& Wellner(2013)van~der Vaart and
  Wellner]{vanderVaartWellner96}
van~der Vaart, A. and Wellner, J.
\newblock \emph{Weak Convergence and Empirical Processes}.
\newblock Springer-Verlag New York, 03 2013.
\newblock ISBN 9781475725452.

\bibitem[{van der Vaart} et~al.(2006){van der Vaart}, Dudoit, and {van der
  Laan}]{vaart2006}
{van der Vaart}, A., Dudoit, S., and {van der Laan}, M.
\newblock {Oracle inequalities for multi-fold cross validation}.
\newblock \emph{Statistics \& Risk Modeling}, 24\penalty0 (3):\penalty0 1--21,
  December 2006.
\newblock URL \url{https://ideas.repec.org/a/bpj/strimo/v24y2006i3p21n3.html}.

\bibitem[Vandenhende(2022)]{vandenhende2022}
Vandenhende, S.
\newblock Multi-task learning for visual scene understanding, 2022.
\newblock URL \url{https://arxiv.org/abs/2203.14896}.

\bibitem[Wang \& Sun(2022)Wang and Sun]{wang2022}
Wang, J. and Sun, L.
\newblock Multi-task personalized learning with sparse network lasso.
\newblock In Raedt, L.~D. (ed.), \emph{Proceedings of the Thirty-First
  International Joint Conference on Artificial Intelligence, {IJCAI-22}}, pp.\
  3516--3522. International Joint Conferences on Artificial Intelligence
  Organization, 7 2022.
\newblock \doi{10.24963/ijcai.2022/488}.
\newblock URL \url{https://doi.org/10.24963/ijcai.2022/488}.
\newblock Main Track.

\bibitem[Yuan \& Lin(2006)Yuan and Lin]{yuan2006}
Yuan, M. and Lin, Y.
\newblock Model selection and estimation in regression with grouped variables.
\newblock \emph{Journal of the Royal Statistical Society: Series B (Statistical
  Methodology)}, 68\penalty0 (1):\penalty0 49--67, 2006.
\newblock \doi{https://doi.org/10.1111/j.1467-9868.2005.00532.x}.
\newblock URL
  \url{https://rss.onlinelibrary.wiley.com/doi/abs/10.1111/j.1467-9868.2005.00532.x}.

\bibitem[Zhang \& Yang(2022)Zhang and Yang]{zhang2022}
Zhang, Y. and Yang, Q.
\newblock A survey on multi-task learning.
\newblock \emph{IEEE Transactions on Knowledge and Data Engineering},
  34\penalty0 (12):\penalty0 5586--5609, 2022.
\newblock \doi{10.1109/TKDE.2021.3070203}.

\bibitem[Zhang \& Yeung(2014)Zhang and Yeung]{zhang2014}
Zhang, Y. and Yeung, D.-Y.
\newblock A regularization approach to learning task relationships in multitask
  learning.
\newblock \emph{ACM Trans. Knowl. Discov. Data}, 8\penalty0 (3), jun 2014.
\newblock ISSN 1556-4681.
\newblock \doi{10.1145/2538028}.
\newblock URL \url{https://doi.org/10.1145/2538028}.

\bibitem[Zhang et~al.(2010)Zhang, Yeung, and Xu]{zhang2010}
Zhang, Y., Yeung, D.-Y., and Xu, Q.
\newblock Probabilistic multi-task feature selection.
\newblock In Lafferty, J., Williams, C., Shawe-Taylor, J., Zemel, R., and
  Culotta, A. (eds.), \emph{Advances in Neural Information Processing Systems},
  volume~23. Curran Associates, Inc., 2010.
\newblock URL
  \url{https://proceedings.neurips.cc/paper/2010/file/839ab46820b524afda05122893c2fe8e-Paper.pdf}.

\bibitem[Zhuang et~al.(2019)Zhuang, Qi, Duan, Xi, Zhu, Zhu, Xiong, and
  He]{zhuang2019}
Zhuang, F., Qi, Z., Duan, K., Xi, D., Zhu, Y., Zhu, H., Xiong, H., and He, Q.
\newblock A comprehensive survey on transfer learning, 2019.
\newblock URL \url{https://arxiv.org/abs/1911.02685}.

\end{thebibliography}
\bibliographystyle{icml2023}

\newpage
\appendix
\onecolumn

\section{Mixed-norm penalty controls the sectional variation norm}

In the following we give a sketch argument that the mixed-norm penalty $l_{2,1}$ still controls the sectional variation norm of a cadlag function. First, let $\psi \in \mathcal{H}$ and $\|\psi\|_{var} < \infty$. Then by \cite{gill1995} we can write $\psi(w)$ as
\begin{equation*}
    \psi(w) = \psi(0) + \sum_{s \subset \{1, \ldots, d\}} \int_{O_s}^{\tau_s} \mathbbm{1}(u \leq w_s)\psi_s(du).
\end{equation*}
We can approximate $\psi$ using a discrete measure $\psi_m$ and $m$ support points. In particular, let $s$ denote a section in $\{1,\ldots,d\}$ with $t$ a knot point, such that $(u_{s,t}:t)$ denotes all the support. Correspondingly, let $d\psi_{m,s,t}$ denote the pointmass assigned to $u_{s,t}$ by $\psi_m$, resulting in an approximation of $\psi_m(w)$ written as 
\begin{equation}\label{eq::approximation}
    \psi(0) + \sum_{s \subset \{1, \ldots, d\}} \sum_t \mathbbm{1}(u_{s,t} \leq w_s) d\psi_{m,s,t}.
\end{equation}
Note that $\mathbbm{1}(u_{s,t} \leq w_s)$ is a basis function, with the corresponding coefficient $d\psi_{m,s,t}$. The discrete approximation of $\psi(w)$ is a linear combination of basis functions summed over all sections $s$ and knots $t$. The sum of absolute values of $d\psi_{m,s,t}$ then corresponds to the variation norm of $\psi$, 
\begin{equation*}
    \|\psi_m\|_{var} = \psi(0) + \sum_{s \subset \{1,\ldots,d\}} \sum_t |d\psi_{m,s,t}|.
\end{equation*}

As defined in Section \ref{sec::algorithm}, let $\Tilde{w}_{s,i}$ denote an observed value $\Tilde{w}_{s,i} = \{\Tilde{w}_{c,i} : c \in s\}$ for subset $s$ with $i = 1, \ldots, n_k$ in task $k$. Applying result in Equation \eqref{eq::approximation} for the support defined by the $n_k$ samples for each task $k$, we derive to the following new approximation for $\psi(w)$:
\begin{equation*}
    \psi(0) + \sum_{s \subset \{1, \ldots, d\}} \sum_{k=1}^K \sum_{i=1}^{n_k} \mathbbm{1}(\Tilde{w}_{s,i} \leq w_s) d\psi_{m,s,t}.
\end{equation*}
where we can define $\phi_{s,i} = \mathbbm{1}(\Tilde{w}_{s,i} \leq w_s)$ and $d\psi_{m,s,t} = \beta_{s,i}$ to ease notation. The variation norm of $\psi$ is then 
\begin{equation*}
    \|\psi_m\|_{var} = \psi(0) + \sum_{s \subset \{1,\ldots,d\}} \sum_{k=1}^K \sum_{i=1}^{n_k} |\beta_{s,i}|, 
\end{equation*}
which corresponds to the $l_1$ norm for the following optimization problem
\begin{equation*}
   \psi_n = \argmin{\psi \in \mathcal{H}_M} P_n L(\psi)
\end{equation*}
with 
\[
\mathcal{H}_{M} = \begin{cases}
    \psi \in \mathcal{H} \\
    \text{s.t.} \ \|\psi\|_{var} \leq M.
    \end{cases}
\]
If $\psi \in \mathcal{H}$, we need the true variation norm of $\psi$ to be smaller than some universal constant M, which is equivalent to the amount of "allowance" given by the $l_1$ penalty (allowed upper bound on the sum of absolute values of the coefficients). It's straight forward to show that
\begin{align*}
\|\beta\|_{1} &= \sum_{p=1}^N \|\beta_p\|_{1}
= \sum_{p=1}^N \left(\sum_{k=1}^K \sqrt{|\beta_{p}^k|^2}\right)  \\
&\geq \sum_{p=1}^N \left(\sqrt{\sum_{k=1}^K |\beta_{p}^k|^2}\right) = \sum_{p=1}^N \|\beta_p\|_{2}  \\
&= \|\beta\|_{2,1},
\end{align*}
proving that $l_{2,1}$ norm is less than or equal to the variation norm, thus controlling the amount of allowed variation. 

\section{Additional Simulations}

\subsection{Different sample sizes}

In the following, we provide results of additional simulations at various sample sizes. In particular, we report performance of MT-HAL, MT-lasso and MT-L21 for $n=(50,100,200, \ldots, 800,900,1000)$. All additional simulations correspond to nonlinear data-generating processes (DGPs) as described in Section \ref{sec::simulations}. We consider nonlinear DGPs across various $n$ with (1) high level of sparsity ($60\%$, ``H") vs. low sparsity ($20\%$, ``L" ); (2) tasks with the same level of sparsity (``S") vs. distinct sparsity levels across tasks (``D"). For all simulations, we consider $K=5$ tasks and different number of samples for each $k$, in order to encourage different level of importance for each task in the optimization procedure. The number of covariates remains the same across all tasks ($d=6$). We report the mean squared error (MSE) at each sample size and DGP in Figure \ref{fig::mse}, as well as coefficient precision and accuracy in Figures \ref{fig::precision} and \ref{fig::recall}. All reported results are calculated on a separate test data over $100$ Monte Carlo simulations.

As reported for $n=600$ in the main simulation results, MT-HAL achieves the lowest MSE across all DGPs and sample sizes considered. The improvement seen at $n=600$ persists even for small sample sizes ($n=50$), and consistently shows reduction in the MSE by more than a half. In terms of coefficient precision, the hardest settings across all algorithms and sample sizes are, as expected, DGPs with high sparsity, where MT-lasso seems to perform the best. However, accuracy results also demonstrate that MT-lasso tends to produce a lot of false negatives. Except for the very small samples sizes, MT-HAL results in best accuracy and comparable precision across all considered DGP settings. 

\subsection{Linear Setup with Interactions}

The data-generating processes corresponding to simulation ``LHS'', ``LLS'', ``LHD'' and ``LLD'' are linear models with $d=6$ covariates and normally distributed coefficients. The setup for linear simulations is the same as for nonlinear, we just omit the nonlinear transformations of the covariate space. For example, true coefficients for simulation ``LHS'' are $\beta_{0,LHS} = (\beta_{0,1}, \ldots, \beta_{0,|LHS|}, 0, \ldots, 0)$. Each $\beta_{0,j}$ for $1 \leq j \leq |LHS|$ is sampled from a standard normal distribution, and the error term is normal with final coefficient of $0.3$. For simulations with the same level of sparsity across tasks, highest level of sparsity was $60\%$, whereas the lowest was $20\%$, as for the nonlinear DGPs. For different sparsity profiles across tasks, we considered random deviations from true level of sparsity across different tasks ($40\%-80\%$ for high level of sparsity and $0.05\%-40\%$ for low). Covariates with nonzero coefficients consist of binary, categorical and continuous variables. Final outcome regression consist of only linear terms, but we add few interactions as well (up to second order). High-sparsity settings put most nonzero coefficients on terms that are not interactions, while low-sparsity setup includes interactions. We report results for linear simulations at $n=600$ in Table \eqref{table::table_sim2}. 

As expected, there is a vast improvement in overall performance for all methods for an easier (linear) setup. For high-sparsity simulations (LHS and LHD), the true model is linear with no interactions, hence both MT-lasso and MT-L21 operate within the true model. As MT-HAL starts from a nonparametric space, it is not surprising to see MT-lasso and MT-L21 perform better in this setup; it is, however, usually unrealistic to know the true DGP in advance. It is worth nothing that, even in a completely linear setting, as soon as the true DGP contains few interactions, performance of MT-lasso and MT-L21 significantly drops in terms of the MSE. As observed in nonlinear simulations, MT-HAL preserves its advantage in terms of MSE performance for high-sparsity simulations; when MT-lasso and MT-l21 operate within their true model, MT-HAL remains very competitive.

\subsection{High-dimensional Setup}

Finally, we demonstrate performance of the multi-task HAL in high-dimensions in Table \ref{table::table_sim1_hd}. In particular, the data-generating processes corresponding to simulation ``HNHS'', ``HNLS'', ``HNHD'' and ``HNLD'' are high-dimensional nonlinear models with $d=20$ covariates per each task and normally distributed coefficients. The DGPs in the high-dimensional setting are as previously described in Section \ref{sec::simulations}. The true coefficients for simulation ``HNHS'' are $\beta_{0,HNHS} = (\beta_{0,1}, \ldots, \beta_{0,|NHS|}, 0, \ldots, 0)$, where each $\beta_{0,j}$ for $1 \leq j \leq |HNHS|$ is sampled from a standard normal distribution. The corresponding error term is sampled from $\mathcal{N}(0,0.1)$, with final coefficient of $0.3$. Highest and lowest level of sparsity was $60\%$ and $20\%$ for simulations where a lot of sharing across tasks is expected. For different sparsity profiles across tasks, we considered random deviations from true level of sparsity across different tasks ($40\%-80\%$ for high level of sparsity and $0.05\%-40\%$ for low). Covariates with nonzero coefficients are transformed via exponential, logarithmic, cosine and squared operations of predictors and predictor interactions. Final MSE, coefficient precision and accuracy results for high-dimensional nonlinear simulations at $n=600$ are shown in Table \eqref{table::table_sim1_hd}.

Similarly to other nonlinear simulations with smaller number of covariates (and less continuous predictors),  
MT-HAL remains the MTL algorithm with the smallest mean squared error. Across all DGPs considered, it consistently shows reduction in MSE by more than a half (as seen in previous simulations as well). Precision and accuracy show a significant decrease for all considered MTL algorithms, as expected for high-dimensional highly nonlinear settings. Despite not getting all non-zero coefficients right, MSE remains surprisingly low for all considered methods. As observed previously, MT-HAL universally produces better accuracy and comparable precision in terms of the non-zero coefficients, compared to considered competitors. 

\begin{table}[ht!]
\caption{Mean squared error (MSE), precision (``Prec") and accurarcy (``Accu'') for each of the linear simulation setups with $d=6$ covariates, $K=5$ tasks, and total of $n=600$ samples split between the $K$ tasks as $\{100,100,150,150,100\}$. Reported results were generated across $100$ Monte Carlo simulations.}\label{table::table_sim2}
\vskip 0.15in
\begin{center}
\begin{small}
\begin{sc}
\begin{tabular}{@{} @{} lrrrrr @{} @{}}
\toprule
\textbf{Setup} & \textbf{Method} & \textbf{MSE} & \textbf{Prec $\%$}  & \textbf{Accu $\%$} \\
\midrule
\centering
LHS & MT-HAL   & 0.062 & 95.2 & 85.7  \\
LHS & MT-lasso & 0.054 & 100 & 76.4  \\
LHS & MT-L21   & 0.014 & 100 & 90.3  \\
\midrule
LLS & MT-HAL   & 0.144 & 94.1 & 74.1  \\
LLS & MT-lasso & 0.395 & 99.8 & 65.1  \\
LLS & MT-L21   & 0.324 & 99.7 & 74.9  \\
\midrule
LHD & MT-HAL   & 0.060 & 94.9 & 85.2  \\
LHD & MT-lasso & 0.035 & 100 & 78.7  \\
LHD & MT-L21   & 0.018 & 100 & 85.7  \\
\midrule
LLD & MT-HAL   & 0.284 & 95.9 & 78.0  \\
LLD & MT-lasso & 0.717 & 100 & 64.5  \\
LLD & MT-L21   & 0.644 & 99.8 & 77.2  \\
\bottomrule
\end{tabular}
\end{sc}
\end{small}
\end{center}
\vskip -0.1in
\end{table}

\begin{table}[ht!]
\caption{Mean squared error (MSE), precision (``Prec") and accurarcy (``Accu") for each of the high-dimensional nonlinear simulation setups with $d=20$ covariates, $K=5$ tasks, and total of $n=600$ samples split between the $K$ tasks as $\{100,100,150,150,100\}$. Reported results were generated across $100$ Monte Carlo simulations.}\label{table::table_sim1_hd}
\vskip 0.15in
\begin{center}
\begin{small}
\begin{sc}
\begin{tabular}{@{} @{} lrrrrr @{} @{}}
\toprule
\textbf{Setup} & \textbf{Method} & \textbf{MSE} & \textbf{Prec $\%$}  & \textbf{Accu $\%$} \\
\midrule
\centering
HNHS & MT-HAL   & 0.446 & 20.8 & 37.8  \\
HNHS & MT-lasso & 0.863 & 44.2 & 27.5  \\
HNHS & MT-L21   & 0.808 & 39.0 & 35.7  \\
\midrule
HNLS & MT-HAL   & 0.431 & 38.2 & 50.1  \\
HNLS & MT-lasso & 1.01 & 46.1 & 31.3  \\
HNLS & MT-L21   & 0.812 & 45.4 & 40.0  \\
\midrule
HNHD & MT-HAL   & 0.425 & 37.3 & 36.7  \\
HNHD & MT-lasso & 0.841 & 34.3 & 26.4  \\
HNHD & MT-L21   & 0.809 & 42.6 & 33.7  \\
\midrule
HNLD & MT-HAL   & 0.451 & 36.5 & 31.7  \\
HNLD & MT-lasso & 1.01 & 46.1 & 30.5  \\
HNLD & MT-L21   & 0.842 & 42.9 & 39.1  \\
\bottomrule
\end{tabular}
\end{sc}
\end{small}
\end{center}
\vskip -0.1in
\end{table}




\begin{figure}[ht]
\vskip 0.2in
\begin{center}
\centerline{\includegraphics[width=\columnwidth]{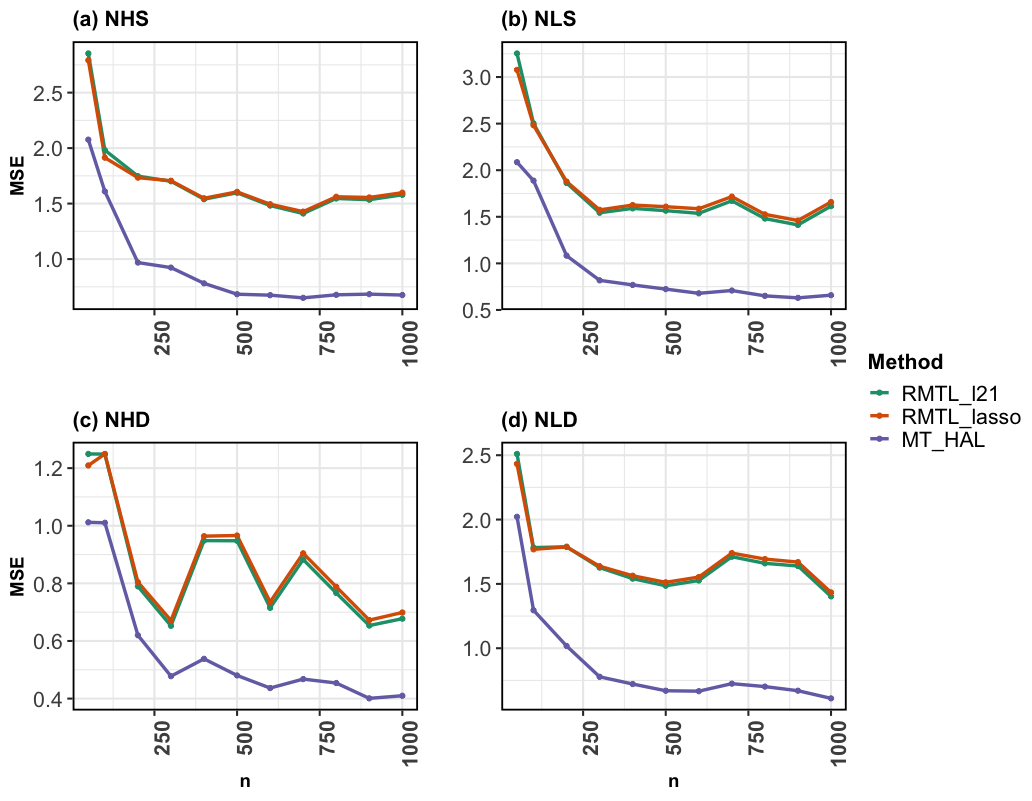}}
\caption{Mean Squared Error (MSE) at sample sizes $n=(50,100,200, \ldots, 800,900,1000)$ for each of the nonlinear simulation setups: nonlinear, high sparsity, same sparsity profile across tasks (``NHS"); nonlinear, low sparsity, same sparsity profile across tasks (``NLS"); nonlinear, high sparsity, different sparsity profile across tasks (``NHD"); nonlinear, low sparsity, different sparsity profile across tasks (`NLD"). All simulation setups contain $d=6$ covariates and $K=5$ tasks. Reported results were generated across $100$ Monte Carlo simulations.}
\label{fig::mse}
\end{center}
\vskip -0.2in
\end{figure}

\begin{figure}[ht]
\vskip 0.2in
\begin{center}
\centerline{\includegraphics[width=\columnwidth]{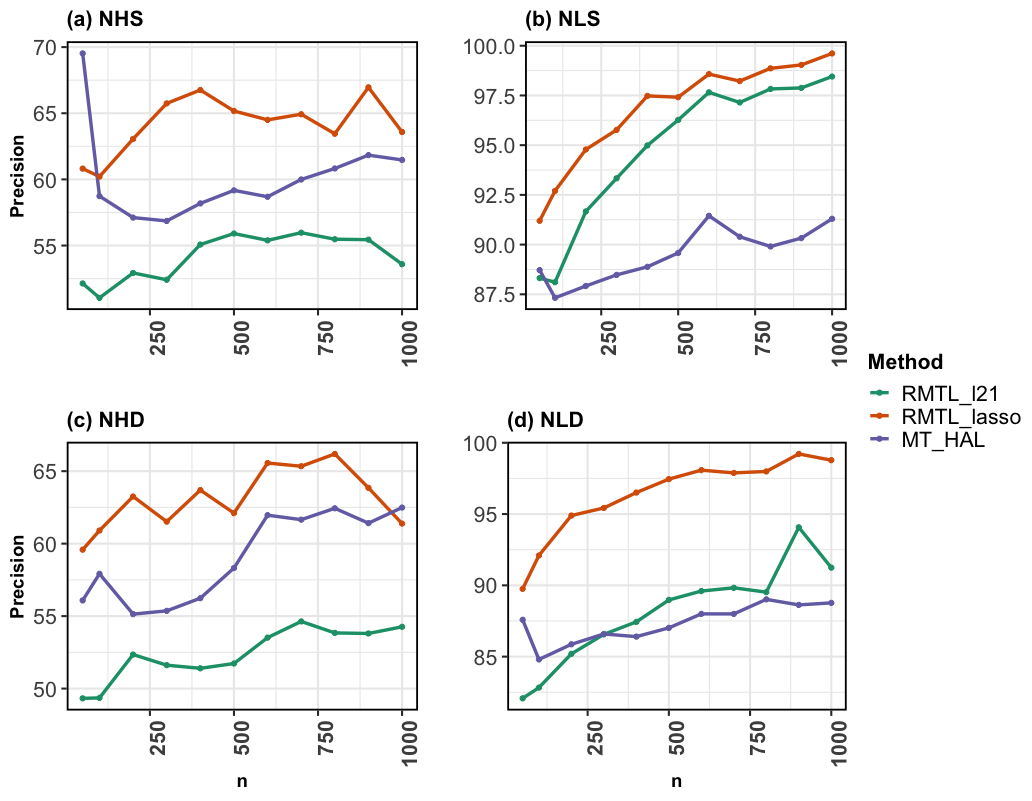}}
\caption{Coefficient precision at sample sizes $n=(50,100,200, \ldots, 800,900,1000)$ for each of the nonlinear simulation setups: nonlinear, high sparsity, same sparsity profile across tasks (``NHS"); nonlinear, low sparsity, same sparsity profile across tasks (``NLS"); nonlinear, high sparsity, different sparsity profile across tasks (``NHD"); nonlinear, low sparsity, different sparsity profile across tasks (`NLD"). All simulation setups contain $d=6$ covariates and $K=5$ tasks. Reported results were generated across $100$ Monte Carlo simulations.}
\label{fig::precision}
\end{center}
\vskip -0.2in
\end{figure}

\begin{figure}[ht]
\vskip 0.2in
\begin{center}
\centerline{\includegraphics[width=\columnwidth]{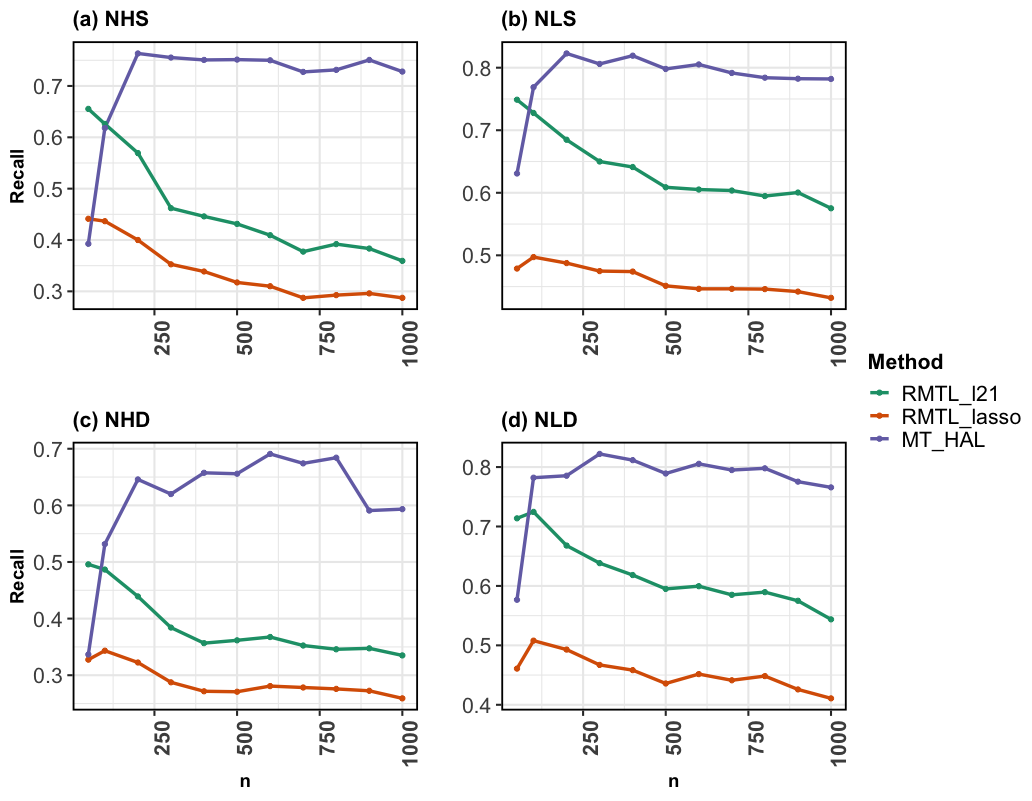}}
\caption{Coefficient accuracy at sample sizes $n=(50,100,200, \ldots, 800,900,1000)$ for each of the nonlinear simulation setups: nonlinear, high sparsity, same sparsity profile across tasks (``NHS"); nonlinear, low sparsity, same sparsity profile across tasks (``NLS"); nonlinear, high sparsity, different sparsity profile across tasks (``NHD"); nonlinear, low sparsity, different sparsity profile across tasks (`NLD"). All simulation setups contain $d=6$ covariates and $K=5$ tasks. Reported results were generated across $100$ Monte Carlo simulations.}
\label{fig::recall}
\end{center}
\vskip -0.2in
\end{figure}

\end{document}